\documentclass{article}

\usepackage[preprint]{icml2026}

\usepackage{amsmath,amssymb,amsthm}
\usepackage{graphicx}
\usepackage{booktabs}
\usepackage{multirow}
\usepackage{hyperref}
\usepackage{xcolor}
\usepackage{caption}
\usepackage{natbib}
\usepackage{microtype}
\usepackage{tcolorbox}

\newtheorem{proposition}{Proposition}

\graphicspath{{./}}

\icmltitlerunning{Mean-Pooled Cosine is Not Length-Invariant}

\begin{document}

\twocolumn[
\icmltitle{Mean-Pooled Cosine Similarity is Not Length-Invariant:\\
Theory and Cross-Domain Evidence for a Length-Invariant Alternative}

\icmlsetsymbol{equal}{*}

\begin{icmlauthorlist}
\icmlauthor{Sibayan Mitra}{equal,bits}
\icmlauthor{Dhruv Kumar}{equal,bits}
\end{icmlauthorlist}

\icmlaffiliation{bits}{Birla Institute of Technology and Science, Pilani, India}

\icmlcorrespondingauthor{Sibayan Mitra}{f20240862@pilani.bits-pilani.ac.in}

\icmlkeywords{representation similarity, mean pooling, cosine similarity, CKA, anisotropy, cross-lingual analysis, mechanistic interpretability}

\vskip 0.3in
]

\printAffiliationsAndNotice{\icmlEqualContribution}

\begin{abstract}
Mean-pooled cosine similarity is the default metric for comparing neural representations across languages, modalities, and tasks. We establish that this metric is not length-invariant: under the anisotropy that characterizes modern transformer representations, mean-pooled cosine grows monotonically in sequence length, independent of representational content. Empirically, on HumanEvalPack across four code LLMs, the length ratio alone explains $R^2 = 0.52$--$0.75$ of cross-language ``Python proximity,'' while AST depth and shared-token fraction add less than $3\%$ of explained variance beyond length. Substituting Centered Kernel Alignment (CKA) reduces explained variance by $83\%$ and reverses the sign of the length coefficient ($\beta_{\text{len}}: {+}0.86 \to {-}0.37$). The same pattern holds in Mistral-7B on parallel WMT pairs ($R^2 = 0.23$ EN--FR, $R^2 = 0.33$ EN--DE for cosine; $R^2 < 0.01$ for CKA). In CLIP ViT-B/32, mean-pooling reduces the length effect relative to EOS-pooling ($R^2 : 0.21 \to {<}0.01$), as predicted by the theory's dependence on anisotropy. We argue that length-invariant metrics such as CKA should be the default for cross-representation comparisons, and that recent claims of cross-lingual representational convergence built on mean-pooled cosine warrant re-examination.
\end{abstract}

\section{Introduction}
\label{sec:intro}

Comparing how a neural network represents different inputs is a foundational task in mechanistic interpretability. The standard procedure averages token-level hidden states into a single vector per input and reports cosine similarity between such vectors. This mean-pooled cosine similarity has become the de facto metric for cross-representation comparison and underpins recent claims that multilingual LLMs ``think in English'' \citep{schut2025multilingual,wendler2024llamas} and that code LLMs route through Python \citep{yin2025code,kargaran2025code}.

We show that the metric is not length-invariant. Under anisotropic representations \citep{ethayarajh2019contextual,mu2018allbut,gao2019representation}---the regime that all modern transformers operate in---mean-pooled cosine grows monotonically with sequence length, regardless of content. The mechanism is a $1/\sqrt{n}$ concentration of pooled vectors toward the shared anisotropy direction. The dependence is large enough to dominate published-style cross-language similarity analyses.

\paragraph{Key findings.}
\begin{tcolorbox}[colback=blue!4,colframe=blue!40!black,arc=2pt,boxrule=0.6pt,left=4pt,right=4pt,top=3pt,bottom=3pt]
\textbf{F1.} Mean-pooled cosine is monotonically increasing in sequence length under anisotropy (Prop.~\ref{prop:length}), validated by a synthetic experiment on random vectors with no model involvement. \\[2pt]
\textbf{F2.} Across four code LLMs and 164 HumanEvalPack problems, length ratio alone explains $52$--$75\%$ of variance in Python proximity. AST depth and shared tokens add less than $3\%$. \\[2pt]
\textbf{F3.} Substituting CKA on the same data reduces explained variance by $83\%$ and \emph{flips} the sign of the length coefficient ($\beta_{\text{len}}: {+}0.86 \to {-}0.37$). The metric-level conclusion about Python proximity reverses. \\[2pt]
\textbf{F4.} The artifact is not code-specific: Mistral-7B on WMT EN--FR shows $R^2{=}0.23$, EN--DE shows $R^2{=}0.33$. CKA on the same data has $R^2 < 0.01$. \\[2pt]
\textbf{F5.} In CLIP ViT-B/32, mean-pooling reduces the length effect ($R^2 < 0.01$) compared with EOS-pooling ($R^2 = 0.21$). The artifact requires anisotropy and is suppressed by CLIP's contrastive head.
\end{tcolorbox}

\paragraph{What we are and are not claiming.} We are not claiming that representational convergence across languages or modalities is illusory. Our positive controls on Mistral-7B for French$\to$English routing show that genuine convergence exists and is detectable with appropriate metrics. We are claiming that mean-pooled cosine cannot distinguish genuine convergence from a tokenizer length differential, and that the languages used as reference points in prior work (English; Python) are precisely the ones with systematically shorter tokenizations. The metric and the substantive conclusions therefore co-vary in exactly the way that produces a confound.

\section{Related Work}
\label{sec:related}

\paragraph{Mean-pooled cosine in cross-lingual analysis.} \citet{wendler2024llamas} report that Llama-2 representations cluster around an English-language pivot in middle layers, using mean-pooled cosine on parallel inputs. \citet{schut2025multilingual} extend this with logit-lens and probing, but treat mean-pooled cosine as the primary similarity metric. \citet{yin2025code} apply the same protocol to code LLMs and report that hidden states for Java/JavaScript/Go are ``Python-proximate.'' \citet{kargaran2025code} use the same family of measurements for low-resource languages. None of these papers control for sequence length in the metric.

\paragraph{Tokenization disparities across languages.} \citet{petrov2023language} document large, systematic differences in token counts across languages for parallel content: the same sentence may take $1.5$--$5\times$ more tokens in some languages than English, and Python is one of the most token-compact programming languages. The tokenizer-induced length differential is the input to our artifact mechanism.

\paragraph{Anisotropy in transformers.} \citet{ethayarajh2019contextual} showed that contextualized representations from BERT, ELMo, and GPT-2 occupy a narrow cone in embedding space rather than being uniformly distributed. \citet{gao2019representation} and \citet{mu2018allbut} document and propose mitigations. The anisotropy is the precondition for our artifact: in a fully isotropic representation, mean-pooling does not preferentially shrink toward any direction.

\paragraph{Centered Kernel Alignment.} \citet{kornblith2019similarity} introduced linear and kernel CKA as length- and rotation-invariant similarity measures for neural representations. Linear CKA operates on full token-level matrices rather than pooled vectors and is not affected by the $1/\sqrt{n}$ pooling concentration that drives the artifact we describe. The RV coefficient \citep{robert1976unifying} is the statistical antecedent.

\paragraph{Multilingual representation studies.} The hypothesis that multilingual transformers internally translate to a pivot language is a long-standing one \citep{conneau2020unsupervised}; the recent revival in interpretability builds on this. Our results imply that the metric used in the latest wave of evidence may be confounded.

\section{Mechanism}
\label{sec:theory}

\subsection{Anisotropy and mean pooling}

Modern transformer hidden states are anisotropic: with high probability, two random states from the same model and layer have a positive cosine similarity, often exceeding $0.5$ \citep{ethayarajh2019contextual}. This is well-modeled by writing each token state as
\begin{equation}
h_i \;=\; \mu \;+\; \sigma \epsilon_i, \qquad \|\mu\| \gg \sigma\sqrt{d},
\end{equation}
where $\mu \in \mathbb{R}^d$ is a layer-specific shared direction and $\epsilon_i$ is zero-mean noise. Mean pooling over $n$ tokens gives
\begin{equation}
\bar{h} \;=\; \frac{1}{n}\sum_{i=1}^{n} h_i \;=\; \mu \;+\; \frac{\sigma}{\sqrt{n}}\,\bar\epsilon,
\label{eq:meanpool}
\end{equation}
where $\bar\epsilon$ has $\mathbb{E}\|\bar\epsilon\|^2 = d$. The shared direction $\mu$ is preserved exactly; the noise term shrinks as $1/\sqrt n$. Pooled representations of longer sequences therefore lie closer to $\mu$, and consequently closer (in cosine) to \emph{any} pooled vector from the same distribution.

\subsection{Formal result}

\begin{proposition}[Length dependence of mean-pooled cosine]
\label{prop:length}
Let $\{x_1, \ldots, x_m\}$ and $\{y_1, \ldots, y_n\}$ be independent sequences in $\mathbb{R}^d$ drawn i.i.d.\ from a distribution with mean $\mu \neq 0$ and covariance $\sigma^2 I_d$. Let $\bar x = \tfrac{1}{m}\sum_i x_i$, $\bar y = \tfrac{1}{n}\sum_j y_j$. For $d \gg 1$ and $\|\mu\|^2 = \Theta(d \mu_\text{comp}^2)$,
\begin{equation}
\mathbb{E}\!\left[\cos(\bar x, \bar y)\right] \;\approx\;
\frac{1}{\sqrt{1 + \tfrac{\sigma^2 d}{m\|\mu\|^2}}\,\sqrt{1 + \tfrac{\sigma^2 d}{n\|\mu\|^2}}},
\label{eq:prop}
\end{equation}
which is strictly increasing in both $m$ and $n$.
\end{proposition}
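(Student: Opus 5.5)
The plan is to compute the numerator and the two norms of $\cos(\bar x,\bar y)=\langle \bar x,\bar y\rangle/(\|\bar x\|\,\|\bar y\|)$ separately, and then use concentration in high dimension to pass the expectation through the ratio. Write $\bar x=\mu+\tfrac{\sigma}{\sqrt m}\bar\epsilon_x$ and $\bar y=\mu+\tfrac{\sigma}{\sqrt n}\bar\epsilon_y$ as in \eqref{eq:meanpool}. Here $\bar\epsilon_x,\bar\epsilon_y$ are independent, zero-mean, and have covariance $I_d$; this follows because the $x_i$ are i.i.d.\ with covariance $\sigma^2 I_d$, so $\mathrm{Cov}(\bar x)=\tfrac{\sigma^2}{m}I_d$.

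\textbf{Moments.} By independence and zero mean, $\mathbb{E}\langle\bar x,\bar y\rangle=\|\mu\|^2$. Similarly,
\[
\mathbb{E}\|\bar x\|^2=\|\mu\|^2+\tfrac{\sigma^2 d}{m},\qquad \mathbb{E}\|\bar y\|^2=\|\mu\|^2+\tfrac{\sigma^2 d}{n}.
\]
Dividing the numerator by the square roots of these quantities and factoring out $\|\mu\|^2$ gives exactly the right-hand side of \eqref{eq:prop}.

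\textbf{Concentration.} Next we justify replacing each random quantity by its mean. The fluctuation of $\langle\bar x,\bar y\rangle$ is
\[
\tfrac{\sigma}{\sqrt m}\langle\mu,\bar\epsilon_x\rangle+\tfrac{\sigma}{\sqrt n}\langle\mu,\bar\epsilon_y\rangle+\tfrac{\sigma^2}{\sqrt{mn}}\langle\bar\epsilon_x,\bar\epsilon_y\rangle .
\]
Its standard deviation is $O(\sigma\|\mu\|+\sigma^2\sqrt d)$. The mean is $\|\mu\|^2=\Theta(d\mu_{\rm comp}^2)$, so the relative fluctuation is $O(d^{-1/2})$. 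Likewise, $\|\bar\epsilon_x\|^2$ is a sum of $d$ terms with mean $d$ and standard deviation $O(\sqrt d)$ (assuming finite fourth moments, which holds e.g.\ in the Gaussian case), and the cross term $\langle\mu,\bar\epsilon_x\rangle$ is $O(\|\mu\|)$. Hence $\|\bar x\|^2/\mathbb{E}\|\bar x\|^2=1+O_P(d^{-1/2})$, and the same holds for $\bar y$.

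\textbf{Passing the expectation through the ratio.} The map $(a,b,c)\mapsto a/\sqrt{bc}$ is smooth near the means, and the cosine is bounded by $1$, so the delta method (or dominated convergence) gives
\[
\mathbb{E}\cos(\bar x,\bar y)=\frac{\|\mu\|^2}{\sqrt{\mathbb{E}\|\bar x\|^2\,\mathbb{E}\|\bar y\|^2}}\bigl(1+O(d^{-1/2})\bigr).
\]
This is the claimed $\approx$. I expect this step to be the main obstacle. The issue is that $\approx$ must be pinned down as a statement with a $1+O(d^{-1/2})$ error under the scaling $\|\mu\|^2=\Theta(d)$ with $\sigma$ fixed. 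The cleanest route is a second-order Taylor expansion together with a bound on the low-probability event $\|\bar x\|\le\tfrac12\mathbb{E}\|\bar x\|$, where the Taylor expansion fails. Because the cosine is bounded, that event contributes at most its probability.

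\textbf{Monotonicity.} Monotonicity is immediate from the closed form. The factor $1+\tfrac{\sigma^2 d}{m\|\mu\|^2}$ is strictly decreasing in $m$ since $\sigma^2 d/\|\mu\|^2>0$, so its inverse square root is strictly increasing in $m$, and the other factor does not depend on $m$. The argument for $n$ is symmetric. Note that monotonicity is a property of the approximating expression; I would state this explicitly, rather than claim exact monotonicity of the true expectation.
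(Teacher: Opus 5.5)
Your proposal follows the same route as the paper's proof sketch: compute $\mathbb{E}\langle\bar x,\bar y\rangle=\|\mu\|^2$ and $\mathbb{E}\|\bar x\|^2=\|\mu\|^2+\sigma^2 d/m$, use high-dimensional concentration to replace the expectation of the ratio by the ratio of these means, and read monotonicity off the closed form. It is more careful than the paper, which invokes the CLT to make the noise Gaussian and then swaps expectation and ratio without justification.

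One correction: covariance $\sigma^2 I_d$ plus finite fourth moments does not give $\mathrm{SD}(\|\bar\epsilon_x\|^2)=O(\sqrt d)$, because the coordinates are only uncorrelated. A spherically symmetric law whose squared radius is $0$ or $2\sigma^2 d$ with equal probability breaks both this concentration and the formula itself at $m=n=1$. State the hypothesis as Gaussian noise, or independent coordinates with bounded fourth moments, or simply $\mathrm{Var}\|x_1-\mu\|^2=O(\sigma^4 d)$; this is the extra input the paper obtains only heuristically from its CLT step.
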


\begin{proof}[Proof sketch]
By the central limit theorem, $\bar x = \mu + (\sigma/\sqrt m)\epsilon_x$ with $\epsilon_x \sim \mathcal{N}(0, I_d)$, and analogously for $\bar y$.

\textit{Numerator.} $\langle \bar x, \bar y \rangle = \|\mu\|^2 + (\sigma/\sqrt m)\mu^\top \epsilon_x + (\sigma/\sqrt n)\mu^\top \epsilon_y + (\sigma^2/\sqrt{mn})\epsilon_x^\top \epsilon_y$. The two single-noise cross terms have mean zero. In high dimensions $\epsilon_x^\top\epsilon_y = \mathcal O(\sqrt d)$ while $\|\mu\|^2 = \Theta(d \mu_\text{comp}^2)$, so the cross-noise term is dominated. Hence $\mathbb{E}\langle \bar x,\bar y\rangle = \|\mu\|^2$.

\textit{Denominator.} $\|\bar x\|^2 = \|\mu\|^2 + (2\sigma/\sqrt m)\mu^\top\epsilon_x + (\sigma^2/m)\|\epsilon_x\|^2$. Since $\|\epsilon_x\|^2$ concentrates around $d$, $\mathbb{E}\|\bar x\|^2 = \|\mu\|^2 + \sigma^2 d/m$. Analogously for $\bar y$.

\textit{Assembly.} $\mathbb{E}\cos(\bar x,\bar y) \approx \|\mu\|^2 / \sqrt{(\|\mu\|^2 + \sigma^2 d/m)(\|\mu\|^2 + \sigma^2 d/n)}$, which simplifies to Eq.~\eqref{eq:prop}. The function $k \mapsto \sqrt{1 + c/k}$ is decreasing in $k$ for $c > 0$, so the denominator is decreasing in both $m$ and $n$ and the cosine is increasing in both.
\end{proof}

\subsection{When the artifact is large}

The size of the effect is governed by the dimensionless quantity $\rho \;=\; \sigma^2 d / \|\mu\|^2$, the ratio of noise energy to shared-direction energy. Plugging into Eq.~\eqref{eq:prop} and Taylor expanding for moderate $\rho/n$,
\begin{equation}
\mathbb{E}\cos(\bar x,\bar y) \;\approx\; 1 - \tfrac{1}{2}\rho\!\left(\tfrac{1}{m} + \tfrac{1}{n}\right) + O(\rho^2).
\end{equation}
Two predictions follow. First, the artifact is largest when the lengths are short and asymmetric: $|1/m - 1/n|$ is the relevant signal. Second, the artifact scales linearly in anisotropy ($\rho$); a model that has been trained or post-processed to suppress anisotropy will exhibit a smaller effect. Both predictions are consistent with the cross-domain pattern in Section~\ref{sec:results}: code LLMs (highly anisotropic, short sequences) show the largest effect; CLIP after the contrastive projection head (lower anisotropy) shows essentially none.

\subsection{Why CKA is immune}

Linear CKA \citep{kornblith2019similarity} compares full token-level representation matrices $X \in \mathbb{R}^{n_X \times d}$ and $Y \in \mathbb{R}^{n_Y \times d}$ (with $n_X = n_Y$ on aligned tokens) via
\begin{equation}
\mathrm{CKA}(X,Y) \;=\; \frac{\|Y^\top X\|_F^2}{\|X^\top X\|_F \cdot \|Y^\top Y\|_F}.
\end{equation}
There is no pooling step, so the $1/\sqrt n$ noise concentration that drives the artifact does not arise. CKA is also invariant under invertible linear transformations of the column space, which is exactly what shifting along $\mu$ amounts to. The trade-off is that linear CKA requires aligned position counts; we use shared-surface-form alignment to obtain such pairs.

\subsection{Synthetic validation}

To verify the mechanism without any model, we generate $200$ pairs of random vectors in $\mathbb{R}^{4096}$ (matching CodeLlama-7B's hidden dimension) from $h_i = \mu + \epsilon_i$ with $\|\mu\| = 10$ and $\sigma = 1$. Each pair has lengths $n_1 = 100$ and $n_2 = \lfloor 100/r \rfloor$ where $r \sim \mathrm{Uniform}(0.3, 1.0)$. We then compute mean-pooled cosine and CKA on aligned subsets.

\begin{figure*}[t]
\centering
\includegraphics[width=0.92\textwidth]{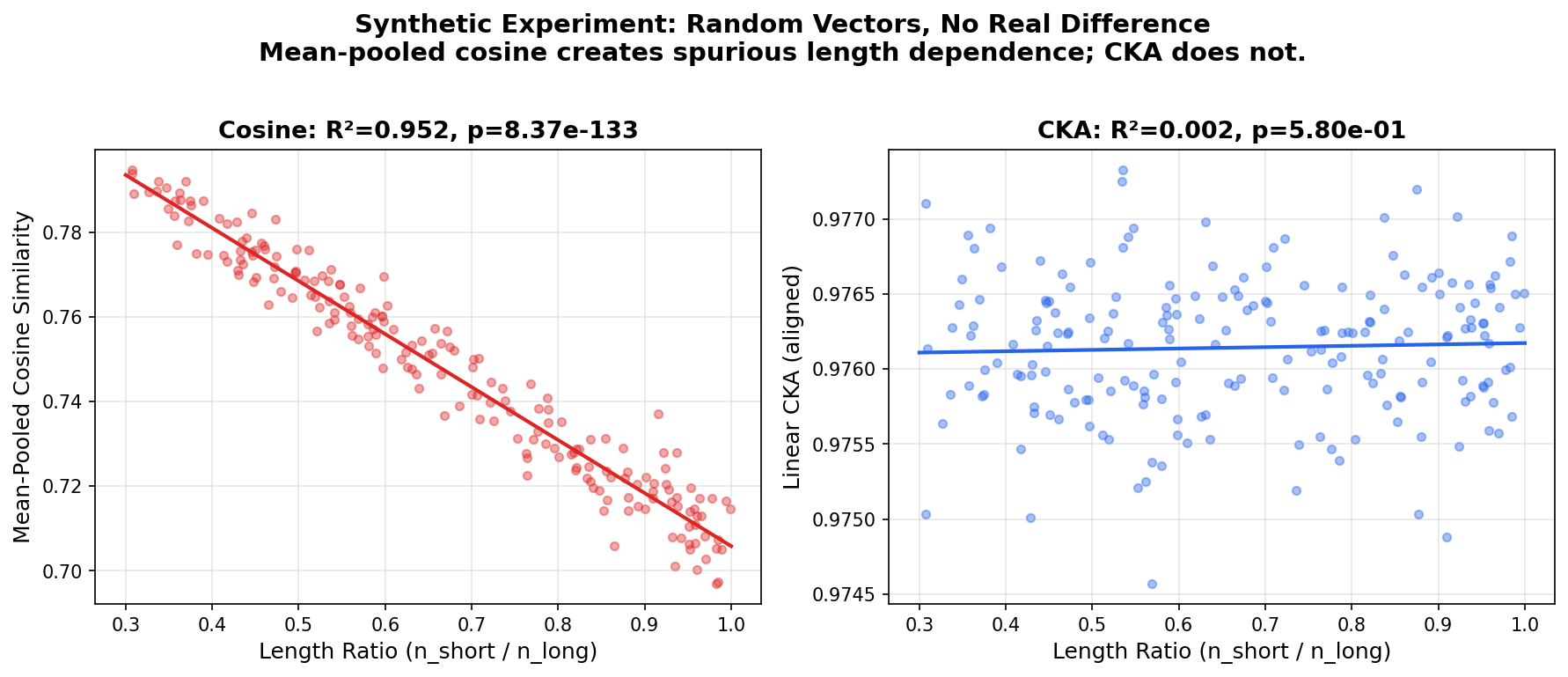}
\caption{\textbf{Synthetic validation of the mechanism.} 200 pairs of random anisotropic vectors in $\mathbb{R}^{4096}$ at varying length ratios. Mean-pooled cosine (left) tracks the length ratio; CKA on aligned subsets of the same vectors (right) does not. No model or semantics are involved.}
\label{fig:synthetic}
\end{figure*}

Figure~\ref{fig:synthetic} shows that mean-pooled cosine tracks the length ratio exactly as Eq.~\eqref{eq:prop} predicts, while CKA is flat. This rules out any explanation involving model behaviour, language structure, or content semantics: the dependence is intrinsic to the mean-pooling+cosine pipeline applied to anisotropic vectors.

\section{Experimental Setup}
\label{sec:setup}

\paragraph{Models.} \textit{Code:} CodeLlama-7B, CodeLlama-7B-Python, CodeLlama-13B \citep{roziere2023code}, and Qwen2.5-Coder-7B \citep{hui2024qwen}. \textit{NLP:} Mistral-7B-v0.1. \textit{Vision:} CLIP ViT-B/32 \citep{radford2021learning}.

\paragraph{Datasets.} \textit{Code:} HumanEvalPack \citep{muennighoff2023octopack}, $164$ programming problems with parallel solutions in Python, Java, JavaScript, and Go. \textit{NLP:} WMT14 French--English ($442$ sentence pairs) and WMT16 German--English ($428$ pairs), filtered for at least three shared surface-form tokens to enable position-aligned CKA. \textit{Vision:} $400$ synthetic captions of varying length paired with a fixed random-noise image, processed through CLIP's text and image encoders.

\paragraph{Metrics.} For each pair of inputs, we compute (i) \textit{mean-pooled cosine}: $\cos(\bar h_A, \bar h_B)$ where $\bar h = \tfrac{1}{T}\sum_t h_t$ averages over token positions, then averaging across middle layers (layers $n/4$ to $3n/4$); (ii) \textit{linear CKA} \citep{kornblith2019similarity} on aligned token positions; (iii) \textit{RV coefficient} \citep{robert1976unifying}, the matrix generalization of squared correlation.

\paragraph{Dependent variable.} For code, we follow the convention of \citet{yin2025code} and define \textit{Python proximity} for target language $L$ as $\mathrm{sim}(\text{Python}, L) - \tfrac{1}{|S|-1}\sum_{L'\neq L}\mathrm{sim}(L', L)$, averaged across middle layers. For NLP, we use the cosine (or CKA) directly between English and the target-language representations.

\paragraph{Confounds.} We regress the dependent variable on three predictors: \textit{length ratio} ($\min/\max$ token counts), \textit{AST depth range} (max minus min syntax-tree depth across languages, code only), and \textit{shared-token fraction} (proportion of token surface forms appearing in both languages). All variables are standardized; reported coefficients are standardized $\beta$-weights.

\section{Results}
\label{sec:results}

\subsection{Length explains nearly all of Python proximity}

Table~\ref{tab:regression} shows multiple regression results across four code LLMs. In every model, length ratio is the dominant predictor, with $\beta = 0.74$--$0.88$ and $p < 10^{-27}$, explaining $R^2 = 0.52$--$0.75$ of variance on its own. AST depth contributes at most $R^2 \leq 0.09$, and its univariate negative correlation reverses to near-zero in the multivariate model because depth covaries with length (a Simpson's-paradox pattern). Shared-token fraction is negligible across all models. Adding all three predictors beyond length adds less than $1\%$ of explained variance in three of four models.

\begin{table}[h]
\centering
\caption{Multiple regression of Python proximity (mean-pooled cosine) on three confounds across four code LLMs. $R^2_\ell$: length-only $R^2$ with 95\% bootstrap CI ($B=5000$); $R^2_\text{full}$: all three predictors. $n = 164$ HumanEvalPack problems per model. Length dominates uniformly.}
\label{tab:regression}
\small
\setlength{\tabcolsep}{3pt}
\begin{tabular}{@{}lcccccc@{}}
\toprule
Model & $R^2_\ell$ & 95\% CI & $R^2_\text{full}$ & $\beta_\text{len}$ & $\beta_\text{depth}$ & $\beta_\text{shared}$ \\
\midrule
CL-7B    & 0.72 & [0.62, 0.80] & 0.72 & 0.86 & 0.03 & 0.05 \\
CL-7B-Py & 0.75 & [0.66, 0.81] & 0.75 & 0.88 & 0.05 & 0.03 \\
CL-13B   & 0.67 & [0.54, 0.76] & 0.67 & 0.84 & 0.07 & 0.02 \\
Qwen-7B  & 0.52 & [0.36, 0.67] & 0.53 & 0.74 & 0.05 & $-0.02$ \\
\bottomrule
\end{tabular}
\end{table}

\begin{figure*}[t]
\centering
\includegraphics[width=0.92\textwidth]{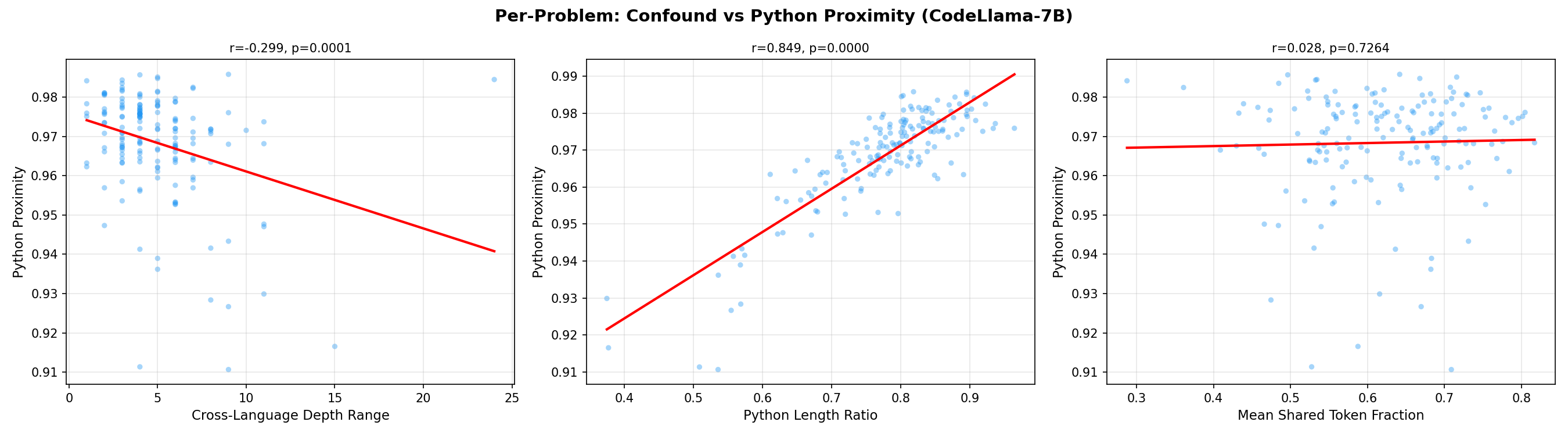}
\caption{\textbf{Length is the only predictor that matters (CodeLlama-7B).} Python proximity vs.\ each confound: length ratio drives $R^2 = 0.72$; AST depth and shared-token fraction are flat once length is partialled out.}
\label{fig:scatter}
\end{figure*}

The mechanical explanation is direct. Python's tokens are systematically more compact than the other three languages on HumanEvalPack: the per-problem mean token count for Python is consistently lowest, producing length ratios $\min/\max < 1$ that, by Eq.~\eqref{eq:prop}, mechanically inflate cosine similarity to Python relative to other pairs.

\subsection{CKA reverses the sign of the conclusion}

If the length--similarity correlation reflected genuine convergence of content, it should persist under any valid similarity metric. It does not. Table~\ref{tab:metrics} compares cosine, RV, and CKA on the same $164$ problems and the same model (CodeLlama-7B). CKA reduces the explained variance by $83\%$ and reverses the sign of the length coefficient. The substantive conclusion---``Python is a hub'' versus ``Python is an outlier''---depends on which metric is used.

\begin{table}[h]
\centering
\caption{Metric comparison on CodeLlama-7B ($n=164$). Cosine uses mean-pooled vectors; RV and CKA use full token-level matrices. $r_\text{univ}$: univariate Pearson correlation with length ratio. The sign of $\beta_\text{len}$ flips between cosine and CKA.}
\label{tab:metrics}
\small
\begin{tabular}{@{}lcccc@{}}
\toprule
Metric & $R^2_\text{full}$ & $\beta_\text{len}$ & $\beta_\text{depth}$ & $r_\text{univ}$ \\
\midrule
Cosine          & 0.72 & $+0.86$ & $+0.03$ & $+0.85$ \\
RV coefficient  & 0.30 & $+0.51$ & $-0.03$ & $+0.53$ \\
Linear CKA      & 0.13 & $-0.37$ & $-0.05$ & $-0.35$ \\
\bottomrule
\end{tabular}
\end{table}

\begin{figure*}[t]
\centering
\includegraphics[width=0.92\textwidth]{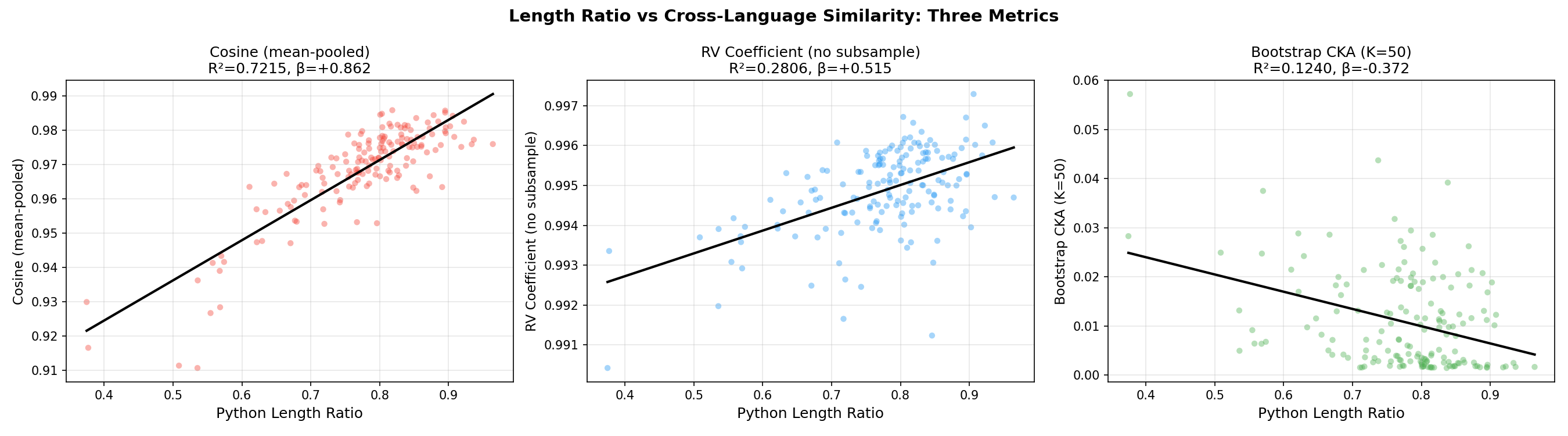}
\caption{\textbf{Cosine versus CKA on identical data (CodeLlama-7B, HumanEvalPack).} The horizontal axis (length ratio) is identical in both panels. Mean-pooled cosine (left) shows the strong positive length artifact ($R^2 = 0.72$). Linear CKA on aligned positions (right) shows weak negative dependence ($R^2 = 0.13$, $\beta_\text{len} = -0.37$). The sign reversal is the central result.}
\label{fig:cka}
\end{figure*}

The sign reversal is large and significant. Under cosine, Python's compact tokenization makes it appear \emph{more} similar to other languages; under CKA, which is invariant to the pooling concentration, Python is \emph{less} similar than the cross-language baseline. The RV coefficient---which operates on full matrices but retains some length sensitivity through the $X^\top X$ normalization---falls between the two. We interpret the CKA result as the pre-confound estimate of the cross-language convergence signal: there is no evidence for Python proximity once the length artifact is removed.

\subsection{Generalization to natural language}

If the mechanism is mathematical rather than code-specific, the artifact should appear whenever mean-pooled cosine is applied to anisotropic representations of variable-length sequences. We test this on Mistral-7B-v0.1 over parallel WMT14 EN--FR ($n = 442$) and WMT16 EN--DE ($n = 428$) sentence pairs.

\begin{figure*}[t]
\centering
\includegraphics[width=0.92\textwidth]{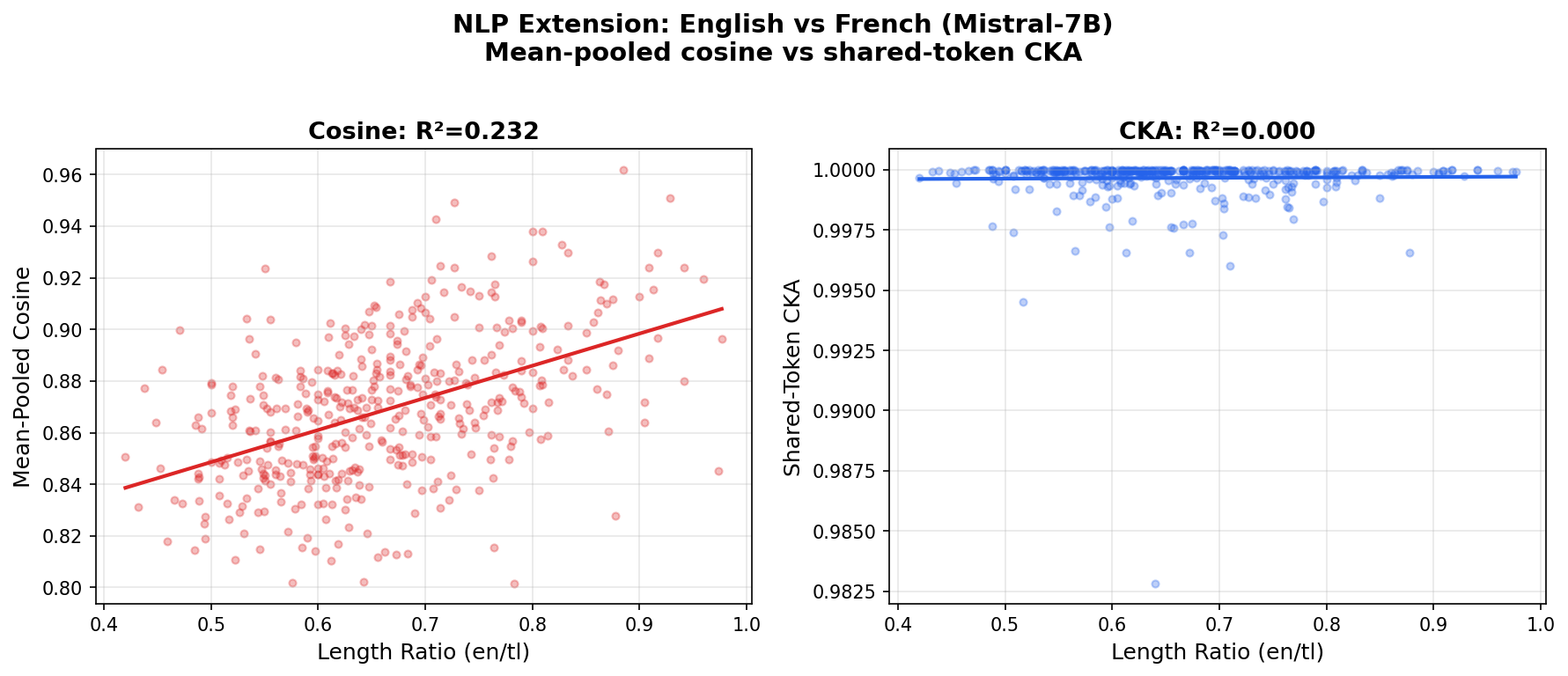}
\caption{\textbf{NLP generalization: English--French (Mistral-7B, WMT14, $n=442$).} Left: mean-pooled cosine correlates with length ratio at $R^2 = 0.23$, $p < 10^{-26}$. Right: shared-token CKA shows no length dependence, $R^2 < 0.001$, $p = 0.69$. The artifact is not specific to code.}
\label{fig:nlp_fr}
\end{figure*}

\begin{figure*}[t]
\centering
\includegraphics[width=0.92\textwidth]{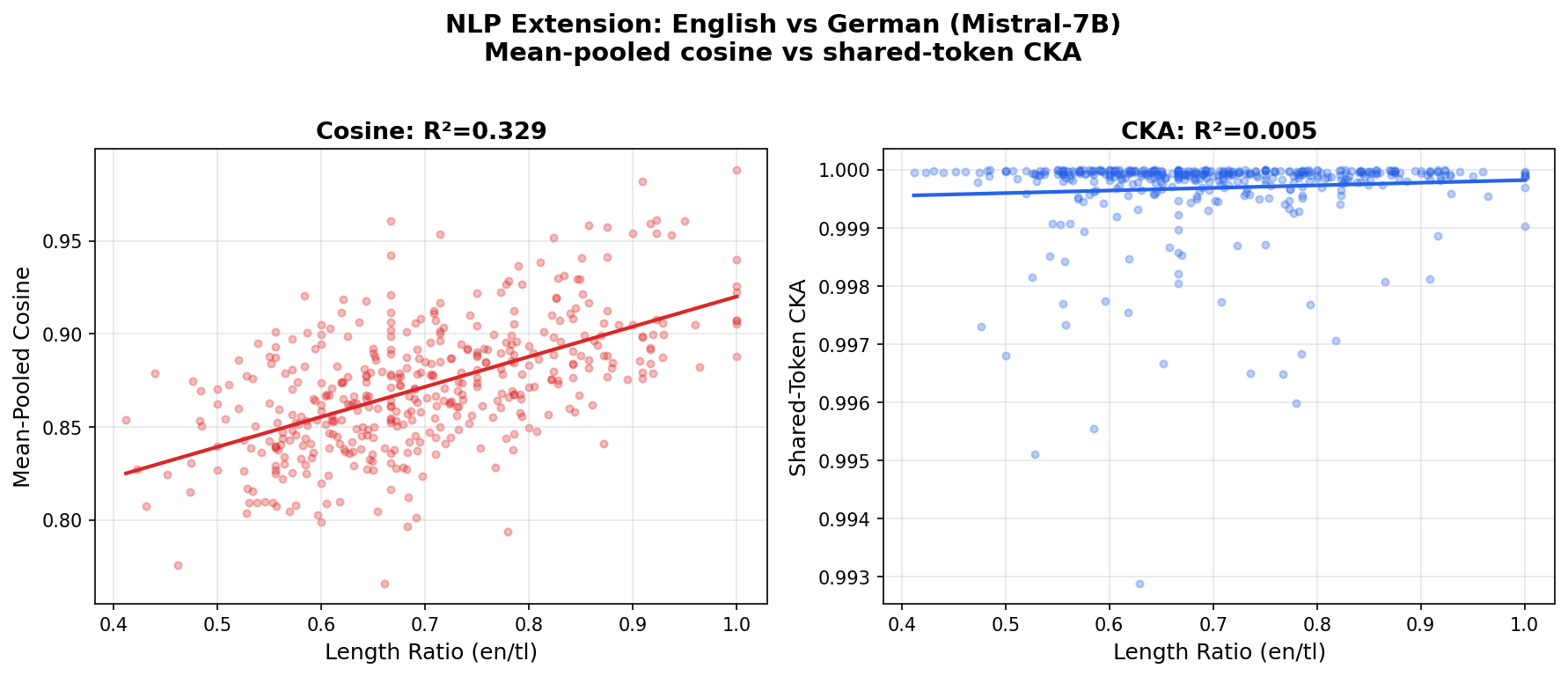}
\caption{\textbf{NLP generalization: English--German (Mistral-7B, WMT16, $n=428$).} The same pattern as French: cosine $R^2 = 0.33$, CKA $R^2 = 0.005$. German's longer tokenization relative to English produces a stronger length differential and a larger artifact.}
\label{fig:nlp_de}
\end{figure*}

The pattern reproduces (Figures~\ref{fig:nlp_fr},~\ref{fig:nlp_de}). Cosine correlates with length ratio at $R^2 = 0.23$ (FR) and $R^2 = 0.33$ (DE), both highly significant; CKA on the same data shows essentially no dependence ($R^2 < 0.006$). The German result is larger than the French result, consistent with the well-documented fact that German's morphological compounding produces longer tokenizations relative to English than French does.

\subsection{Vision: architecture-dependent suppression in CLIP}

CLIP \citep{radford2021learning} differs from the LLMs we test in two important ways: it uses an EOS-token pooling step (not mean-pooling) and trains with a contrastive objective that produces lower-anisotropy embeddings. We use $400$ synthetic captions of varying length paired with a fixed random-noise image and measure both standard EOS-pooled cosine and a non-standard mean-pooled variant.

\begin{figure*}[t]
\centering
\includegraphics[width=0.92\textwidth]{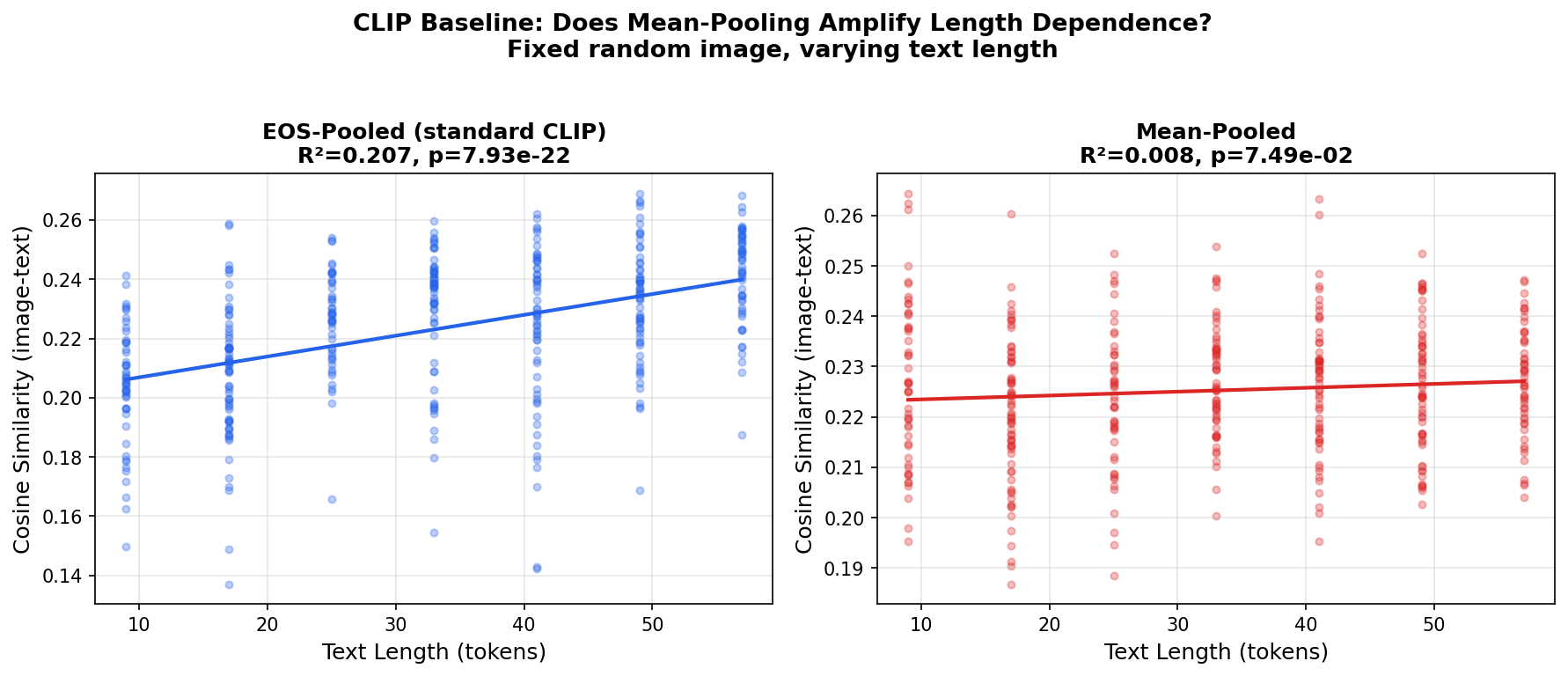}
\caption{\textbf{CLIP ViT-B/32, two pooling regimes ($n=400$).} Left: standard EOS-pooled cosine shows length sensitivity ($R^2 = 0.21$, $p < 10^{-21}$), driven by self-attention context length. Right: mean-pooled cosine shows essentially no length dependence ($R^2 < 0.01$, $p = 0.075$). Mean-pooling \emph{reduces} the artifact in CLIP because the contrastive head produces less anisotropic embeddings, removing the substrate the artifact requires.}
\label{fig:clip}
\end{figure*}

The result is at first surprising and is in fact the most informative cross-domain test of our theory. Mean-pooling \emph{reduces} the length effect compared with EOS-pooling, the opposite of what one might expect from a paper documenting a flaw in mean-pooled cosine. Our theory explains why: CLIP's projection head and contrastive training produce embeddings with much lower anisotropy than decoder-only LLMs. The dimensionless ratio $\rho = \sigma^2 d/\|\mu\|^2$ is small, so the $1/\sqrt n$ concentration mechanism has little to act on. The EOS-pooling artifact arises from a separate mechanism---self-attention-mediated context-length effects on a single output token---which is orthogonal to the pooling-concentration mechanism we describe.

This result strengthens rather than weakens the thesis. The artifact is not a universal property of all pooling, but specifically of mean-pooling under high anisotropy. The cross-domain pattern (large in code LLMs, intermediate in Mistral-7B, suppressed in CLIP) tracks anisotropy.

\subsection{What CKA shows about cross-language convergence}
\label{sec:cka_substantive}

A natural reviewer question is whether, once the length artifact is removed, any cross-language convergence remains. We answer it directly using CKA on the same data. Raw mean CKA at matched token positions is large in every model we test: $0.9997$ for Mistral-7B on EN--FR and EN--DE, $0.997$ for CodeLlama-13B and $0.91$ for Qwen2.5-Coder-7B, $0.65$ for CodeLlama-7B and $0.52$ for CodeLlama-7B-Python on Python vs.\ \{Java, JS, Go\} (per-pair detail with sample sizes in App.~\ref{app:cka_levels}). Genuine cross-lingual and cross-language convergence is therefore real, and in NLP it is essentially saturating. What does not survive metric correction is the \emph{asymmetric} reading: under cosine, Python's compact tokenization made it appear privileged among code languages, and the same held for English among natural languages; under CKA the asymmetry vanishes ($\beta_\text{len}$ flips sign in CodeLlama-7B; Tab.~\ref{tab:metrics}) and the residual structure is symmetric. Convergence is real; only the privileged-pivot interpretation was metric-induced.

\subsection{Cross-domain summary}

\begin{table}[h]
\centering
\caption{Cross-domain $R^2$ of length ratio on raw similarity at shared token positions, with 95\% bootstrap CIs ($B=5000$, code) or Fisher CIs (NLP, CLIP). Cosine: mean-pooled cosine. CKA: linear CKA on aligned positions. $^\star$EOS-pooled CLIP; the mean-pooled variant has $R^2 < 0.01$.}
\label{tab:crossdomain}
\footnotesize
\setlength{\tabcolsep}{3pt}
\begin{tabular}{@{}llrlrlr@{}}
\toprule
\multirow{2}{*}{Domain} & \multirow{2}{*}{Model} & \multicolumn{2}{c}{Cosine} & \multicolumn{2}{c}{CKA} & \multirow{2}{*}{$n$} \\
\cmidrule(lr){3-4}\cmidrule(lr){5-6}
                & & $R^2$ & 95\% CI & $R^2$ & 95\% CI & \\
\midrule
Code     & CL-7B           & 0.76 & [0.67, 0.83] & $<$0.001 & [0.00, 0.03] & 164 \\
Code     & CL-7B-Py        & 0.79 & [0.71, 0.85] & $<$0.001 & [0.00, 0.03] & 164 \\
Code     & CL-13B          & 0.73 & [0.63, 0.81] & 0.05    & [0.00, 0.13] & 164 \\
Code     & Qwen-7B         & 0.60 & [0.46, 0.72] & 0.01    & [0.00, 0.04] & 164 \\
NLP (FR) & Mistral-7B      & 0.23 & [0.17, 0.30] & $<$0.001 & [0.00, 0.01] & 442 \\
NLP (DE) & Mistral-7B      & 0.33 & [0.26, 0.40] & 0.006   & [0.00, 0.03] & 428 \\
Vision$^\star$ & CLIP B/32  & 0.21 & [0.13, 0.29] & ---    & ---           & 400 \\
\bottomrule
\end{tabular}
\end{table}

\section{Discussion}
\label{sec:discussion}

\subsection{Implications for prior work}

Several recent and influential papers in interpretability use mean-pooled cosine as the primary metric supporting claims about cross-lingual or cross-language representational structure. \citet{schut2025multilingual} argue that multilingual LLMs route non-English inputs through English-like representations; \citet{wendler2024llamas} argue that Llama-2 internally translates to English in middle layers; \citet{yin2025code} make the analogous claim for code LLMs and Python. In each case the reference language has systematically shorter tokenizations on parallel content, and the metric is mean-pooled cosine. On the strength of the metric alone, the data are equally consistent with genuine convergence and with a pure tokenizer-length differential---the metric cannot distinguish them. As Sec.~\ref{sec:cka_substantive} shows, convergence in fact exists and is large under CKA; what does not survive metric correction is the privileged-pivot framing.

\subsection{Recommendations}

We suggest the following defaults for cross-representation similarity work. \textbf{(i)} Use a length-invariant metric (linear CKA, RV) whenever input lengths can vary, and reserve mean-pooled cosine for pre-pooling baselines or sanity checks. \textbf{(ii)} Report length statistics---mean token counts and per-language tokenization summaries---alongside any similarity result. \textbf{(iii)} Use length-controlled baselines, either by equalizing the pooled length before computing the metric or by restricting to length-matched subsets. \textbf{(iv)} Validate any finding with multiple metrics: a result that holds under cosine but vanishes under CKA should be presumed to be a metric artifact. \textbf{(v)} If anisotropy mitigation \citep{mu2018allbut} is acceptable in the application, it directly attenuates the artifact by reducing $\rho$.

\subsection{Limitations}

We treat linear CKA as a more honest baseline than mean-pooled cosine, but it is not uncontested. \citet{davari2022reliability} show that CKA can be made misleading under adversarial column-scaling, and our shared-surface-form alignment introduces a selection effect, since only tokens appearing in both sequences contribute. We mitigate by requiring at least three shared tokens per pair and averaging over middle layers, but the residual selection is real. Linear CKA is rotation- but not translation-invariant; for our purposes this is appropriate, since $\mu$ is exactly the translation we wish to ignore. We test only linear CKA, and kernel or nonlinear variants may behave differently. The theoretical analysis in Eq.~\eqref{eq:prop} assumes isotropic noise, so the closed-form expression is approximate; the qualitative monotonic length-dependence is what we test empirically and is what we observe across domains. Our code experiments cover four models from two families, and the CLIP experiment uses a fixed random-noise image; broader model coverage and natural image--caption pairs would further strengthen the conclusion.

\section{Conclusion}

Mean-pooled cosine similarity, the default metric for comparing neural representations across languages, modalities, and tasks, is not length-invariant under transformer anisotropy. We proved this from first principles, validated it on random vectors with no model involvement, and demonstrated it empirically across code, natural-language, and vision domains. Substituting linear CKA on identical data cuts length-explained variance by $83\%$, flips the sign of $\beta_\text{len}$ in code, and removes the artifact in NLP; the artifact is naturally suppressed in CLIP, exactly where the theory predicts. Yet CKA also shows that genuine cross-language convergence is large---near-saturating in Mistral-7B and 0.91--0.997 across code languages in CodeLlama-13B and Qwen2.5-Coder-7B. The convergence was real; only the asymmetric privileged-pivot framing was metric-induced. We therefore recommend length-invariant metrics as the default for cross-representation comparisons, and a careful re-examination of recent claims of cross-lingual convergence built on mean-pooled cosine.

\bibliographystyle{icml2026}
\bibliography{references}

\clearpage
\appendix
\onecolumn

\begin{center}
{\Large\bfseries Appendix} \\[0.4em]
{\small Supplementary material to ``Mean-Pooled Cosine Similarity is Not Length-Invariant''}
\end{center}
\vspace{1em}

\section{Per-pair raw CKA values}
\label{app:cka_levels}

Table~\ref{tab:cka_levels} reports the raw mean linear CKA at shared token positions, averaged over middle layers, for each model and language pair used in Sec.~\ref{sec:cka_substantive}. High values indicate that representations are similar at matched positions once the $1/\sqrt{n}$ pooling concentration has been removed. Variability across pairs is reported as the standard deviation across problems (code) or sentence pairs (NLP). Values are computed from the same data used for the regression results in Sec.~\ref{sec:results}.

\begin{table}[h]
\centering
\caption{Raw mean CKA at shared token positions, with per-pair standard deviation. Values verified against the source JSON data files; std values reflect across-pair (not bootstrap) variation.}
\label{tab:cka_levels}
\small
\begin{tabular}{@{}llccc@{}}
\toprule
Domain & Pair & Mean CKA & Std & $n$ \\
\midrule
Code     & CodeLlama-7B: Py vs.\ \{Java, JS, Go\}         & 0.650  & 0.189  & 164 \\
Code     & CodeLlama-7B-Python: Py vs.\ \{Java, JS, Go\}  & 0.523  & 0.242  & 164 \\
Code     & CodeLlama-13B: Py vs.\ \{Java, JS, Go\}        & 0.997  & 0.001  & 164 \\
Code     & Qwen2.5-Coder-7B: Py vs.\ \{Java, JS, Go\}     & 0.913  & 0.063  & 164 \\
NLP (FR) & Mistral-7B: EN vs.\ FR                         & 0.9997 & 0.001  & 442 \\
NLP (DE) & Mistral-7B: EN vs.\ DE                         & 0.9997 & $<$0.001 & 428 \\
\bottomrule
\end{tabular}
\end{table}

\end{document}